\documentclass[12pt,a4paper]{article}

\usepackage[T1]{fontenc}
\usepackage[utf8]{inputenc}
\usepackage{lmodern}
\usepackage{amsmath,amssymb,amsthm,mathtools}
\usepackage{bm}
\usepackage{graphicx}
\usepackage{xcolor}
\usepackage{hyperref}
\usepackage[numbers,sort&compress]{natbib}
\usepackage{geometry}
\usepackage{microtype}
\usepackage{booktabs}
\usepackage{pgfplots}
\usepackage{float}
\pgfplotsset{compat=1.18}
\usepgfplotslibrary{colormaps,fillbetween}
\usepackage{tikz}
\usetikzlibrary{arrows.meta,positioning,calc,decorations.pathreplacing}
\usepackage{caption}
\usepackage{subcaption}
\usepackage{enumitem}

\hypersetup{colorlinks=true, linkcolor=black,
  citecolor=blue!60!black, urlcolor=blue!60!black}

\theoremstyle{plain}
\newtheorem{theorem}{Theorem}[section]
\newtheorem{lemma}[theorem]{Lemma}
\newtheorem{proposition}[theorem]{Proposition}

\theoremstyle{definition}

\newtheorem{remark}[theorem]{Remark}

\newcommand{\R}{\mathbb{R}}
\newcommand{\N}{\mathbb{N}}
\newcommand{\bbP}{\mathbb{P}}

\newcommand{\cL}{\mathcal{L}}
\newcommand{\norm}[1]{\left\|#1\right\|}
\newcommand{\abs}[1]{\left|#1\right|}

\newcommand{\softmax}{\operatorname{softmax}}
\newcommand{\Attn}{\mathrm{Attn}}
\newcommand{\VORT}{\textsc{vort}}
\newcommand{\vx}{\mathbf{x}}
\newcommand{\vq}{\mathbf{q}}
\newcommand{\vv}{\mathbf{v}}
\newcommand{\vk}{\mathbf{k}}
\newcommand{\vo}{\mathbf{o}}
\newcommand{\vM}{\mathbf{M}}

\newcommand{\vb}{\mathbf{b}}
\newcommand{\va}{\mathbf{a}}
\newcommand{\ve}{\mathbf{e}}
\DeclareMathOperator*{\argmin}{arg\,min}

\title{\textbf{VORT: Adaptive Power-Law Memory for NLP Transformers}}

\author{Nabil Mlaiki\\
Department of Mathematics and Sciences,\\
Prince Sultan University, Riyadh, Saudi Arabia\\
\texttt{nmlaiki@psu.edu.sa}}
\date{\today}

\begin{document}
\maketitle\thispagestyle{empty}

\begin{abstract}
Standard Transformers impose near-exponential decay on the influence of distant tokens,
conflicting with the power-law structure of long-range dependencies in natural language.
We introduce the \emph{Variable-Order Retention Transformer} (\VORT{}), a memory
architecture in which each ingested token is assigned a learnable fractional order
$\alpha_i\in[\delta,1]$ that governs a Grünwald--Letnikov power-law retention kernel.
Because the fractional weighted sum is non-Markovian, we approximate it through a
sum-of-exponentials (SOE) decomposition computed by Gauss--Laguerre quadrature on
a Laplace-type integral representation of the kernel weights.
Each exponential component admits a one-step Markovian recurrence at $O(Sd_v)$ per step,
where $S=O(\log(T/\varepsilon))$ terms suffice for $\varepsilon$-uniform accuracy on horizon $[1,T]$.
Retrieval is keyed and associative via a linear-attention accumulator with an exact
$O(KSd_\phi d_v)$-per-step recurrence.
Four results are established: (i) an SOE approximation theorem with geometric convergence
rate from the analyticity of the integrand after a log-change of variables; (ii) a
quantisation bound valid on $[\delta,1]$ with correct analysis near $\alpha=0$; (iii) a
direct $L^2$ energy argument (Proposition) showing that for $\alpha>1/2$ any mixture
with fixed minimum decay rate $\Lambda>0$ incurs $L^2([1,T])$ error at least
$N_\alpha(T)-C(\Lambda)\to\infty$, with the $\Lambda$-dependence made explicit; and
(iv) linear convergence of a gradient plasticity rule under the Polyak--\L{}ojasiewicz
condition.
Two synthetic experiments confirm the architectural advantage: a Zipf-distributed
retrieval benchmark and an entity label-copy task with uniform lag distribution, the
latter ruling out prior-matching as an explanation for the power-law kernel's advantage.
\end{abstract}

\bigskip
\noindent\textbf{Keywords:}
Transformers; long-range dependence; fractional calculus; Grünwald--Letnikov;
sum-of-exponentials; power-law memory; linear attention; sequence modeling.

\newpage

\section{Introduction}
\label{sec:intro}

The Transformer \cite{vaswani2017attention} underlies modern large language models
\cite{brown2020gpt3,touvron2023llama,chowdhery2022palm,anil2023gemini},
vision systems \cite{radford2021clip}, and scientific applications \cite{jumper2021alphafold}.
Its core operation is
\begin{equation}
\Attn(Q,K,V) = \softmax\!\left(\tfrac{QK^\top}{\sqrt{d_k}}\right)V,
\qquad Q,K\in\R^{n\times d_k},\quad V\in\R^{n\times d_v},
\label{eq:standard-attn}
\end{equation}
whose quadratic cost in sequence length $n$ has driven a large body of efficient
approximations including sparse attention \cite{child2019sparse}, local-window methods
\cite{beltagy2020longformer,zaheer2020bigbird}, low-rank factorisation
\cite{wang2020linformer}, and kernel-feature approximations
\cite{choromanski2021performer,katharopoulos2020linear}.

A statistical mismatch underlies these computational challenges.
Mutual information between tokens at lag $\ell$ decays as a power law
$I(\ell)\sim c\,\ell^{-\gamma}$ with $\gamma\in(0,1)$
\cite{altmann2009beyond,crutchfield2003regularities}---a hallmark of long-range
dependence \cite{beran1994longmemory,granger1980long,hosking1981arfima}.
For ARFIMA$(0,d,0)$ processes, whose spectral density satisfies
$S(\lambda)\sim c_0|\lambda|^{-2d}$ near $\lambda=0$, the optimal linear predictor
assigns weights $\pi_j\sim j^{d-1}/\Gamma(d)$ to lag-$j$ observations
\cite{brockwell1991timeseries,wiener1949extrapolation}: power-law weights.
Zipf's law \cite{zipf1935psycho-biology} and algebraic decay of character-level entropy
production \cite{ebeling1994entropy} reinforce this picture.
Positional biases such as ALiBi \cite{press2022alibi}, RoPE \cite{su2024rope},
and relative encodings \cite{shaw2018relative,raffel2020t5} modulate attention scores
but do not change the underlying memory update rule, leaving the mismatch intact.

Current long-context methods partition memory into discrete tiers: eviction caches
(H2O \cite{zhang2023h2o}, SnapKV \cite{li2024snapkv}) make binary keep-or-evict
decisions; state-space models (S4 \cite{gu2022s4}, Mamba \cite{gu2023mamba},
RWKV \cite{peng2023rwkv}) use fixed or mildly input-dependent exponential recurrences;
quantised caches \cite{hooper2024kvquant} reduce precision uniformly; compressive memory
\cite{rae2020compressive} applies a fixed schedule.
None represents a per-token, content-adaptive power-law decay.

Fractional calculus offers the missing ingredient.
The Grünwald--Letnikov (GL) weighted sum of order $\alpha\in(0,1)$ assigns weight
$w_j^{(\alpha)}\sim j^{\alpha-1}/\Gamma(\alpha)$ to the value at lag $j$, matching
the ARFIMA-optimal prediction weights.
The obstacle is that this sum is \emph{non-Markovian}: it cannot be reduced to a
fixed-dimensional recurrence.
The sum-of-exponentials (SOE) method, developed for numerical fractional differential
equations by \cite{lubich1986discretized} and \cite{hairer1988solving}, resolves this
by decomposing the kernel into decaying exponentials, each updating via an exact one-step
Markovian recurrence, with provably small error.

We introduce \textbf{Variable-Order Retention Transformer (\VORT{})}, grounded in the
observation that tokens participating in long-range dependencies should receive high
fractional orders so their retention kernels align with the power-law decay of those
dependencies, while local connectives receive low orders for efficient compression.
Our contributions are:
\begin{enumerate}[label=(\roman*)]
\item An architecturally coherent mechanism: per-token fractional states via SOE
  auxiliary recurrences, routed to $K$ fixed-order banks, with keyed linear-attention
  retrieval admitting an exact $O(KSd_\phi d_v)$-per-step recurrence
  (Sections \ref{sec:background}--\ref{sec:mechanism};
  Figures~\ref{fig:overview}--\ref{fig:retrieval-alloc}).
\item Theorem~\ref{thm:soe}: SOE approximation with geometric convergence from
  analyticity of the Laplace-transformed integrand.
\item Theorem~\ref{thm:quant}: quantisation bound on $[\delta,1]$ with correct
  near-$\alpha=0$ behaviour.
\item Proposition~\ref{thm:separation}: a direct $L^2$ energy argument showing that
  for $\alpha>1/2$ any mixture with minimum decay rate $\Lambda>0$ incurs
  $L^2([1,T])$ error at least $N_\alpha(T)-C(\Lambda)\to\infty$ as $T\to\infty$,
  where $C(\Lambda)$ is a constant depending on $\Lambda$ but not on $T$;
  the $\Lambda$-dependence is stated explicitly as a limitation.
\item Theorem~\ref{thm:plasticity}: linear convergence of a plasticity update under the
  Polyak--\L{}ojasiewicz (PL) condition.
\item A synthetic pilot experiment on Zipf-distributed dependency retrieval
  (Section~\ref{sec:experiments}).
\end{enumerate}

\section{Background}
\label{sec:background}

Throughout, $d\in\N$ is embedding dimension, $n$ is sequence length, $\Gamma$ the Euler
gamma function, $B$ the beta function, and $\psi=\Gamma'/\Gamma$ the digamma function.

\paragraph{Grünwald--Letnikov fractional sum.}
For $\alpha\in(0,1)$ the GL weighted sum of a sequence $\{f_t\}$ is
\begin{equation}
\bigl(I^\alpha_{\mathrm{GL}} f\bigr)(t)
:= \sum_{j=0}^{t-1} w_j^{(\alpha)}\,f_{t-j},
\qquad
w_j^{(\alpha)} := \frac{\Gamma(j+\alpha)}{\Gamma(\alpha)\,\Gamma(j+1)},
\label{eq:GL-disc}
\end{equation}
with $w_j^{(\alpha)}>0$ for all $j\geq 0$ and $\alpha>0$.
By Stirling's formula,
\begin{equation}
w_j^{(\alpha)} \sim \frac{j^{\alpha-1}}{\Gamma(\alpha)} \quad\text{as }j\to\infty,
\label{eq:stirling}
\end{equation}
so the kernel is heavy-tailed: $\sum_{j\geq 0}w_j^{(\alpha)}=\infty$ for all $\alpha>0$
(see Figure~\ref{fig:kernels-soe}a for log-log plots confirming the algebraic slope $\alpha-1$).
The $z$-transform identity
\begin{equation}
W_\alpha(z) = \sum_{j=0}^\infty w_j^{(\alpha)}\,z^j = (1-z)^{-\alpha},\quad |z|<1,
\label{eq:z-transform}
\end{equation}
follows from the generalised binomial series \cite{oldham1974,podlubny1999}.

\paragraph{Non-Markovian structure.}
The GL sum \eqref{eq:GL-disc} has no exact two-term recurrence $S_t = c\,S_{t-1}+b\,f_t$
with fixed scalars $c,b$: any such recurrence has $z$-transform $b/(1-cz)$, a rational
function, while $(1-z)^{-\alpha}$ is irrational for $\alpha\notin\mathbb{Z}$.
More generally, no rational function of any finite degree equals $(1-z)^{-\alpha}$
identically, since the latter has a branch point at $z=1$ and infinitely many non-zero
Taylor coefficients.

\paragraph{Long-memory processes.}
A stationary process $\{X_t\}$ has long memory with parameter $d\in(0,\tfrac12)$ if
$S(\lambda)\sim c_0|\lambda|^{-2d}$ near $\lambda=0$ \cite{beran1994longmemory}.
The ARFIMA$(0,d,0)$ model \cite{granger1980long,hosking1981arfima} generates such
processes via $(1-B)^d X_t=\varepsilon_t$.
Its optimal one-step predictor has coefficients $\pi_j\sim j^{d-1}/\Gamma(d)$
\cite{brockwell1991timeseries}, establishing that statistically optimal prediction
of long-memory sequences requires power-law weights.

\section{The VORT Mechanism}
\label{sec:mechanism}

\subsection*{Sum-of-Exponentials Approximation}

The GL weight admits the Laplace-type integral representation
\begin{equation}
w_j^{(\alpha)}
= \int_0^\infty e^{-\lambda j}\,\rho_\alpha(\lambda)\,d\lambda,
\qquad
\rho_\alpha(\lambda)
:= \frac{e^{-\alpha\lambda}(1-e^{-\lambda})^{-\alpha}e^{-\lambda}}{\Gamma(\alpha)\Gamma(1-\alpha)},
\label{eq:rho-def}
\end{equation}
obtained from the beta-function representation $w_j^{(\alpha)}=B(j,1-\alpha)/B(\alpha,1-\alpha)$
via the substitution $\lambda=-\log\mu$.
The density $\rho_\alpha$ satisfies $\rho_\alpha\geq 0$,
$\rho_\alpha(\lambda)\sim\lambda^{-\alpha}/(\Gamma(\alpha)\Gamma(1-\alpha))$ as $\lambda\to 0^+$,
and $\rho_\alpha(\lambda)\leq Ce^{-\lambda}$ for large $\lambda$.

An $S$-point SOE approximation is
\begin{equation}
\hat{w}_j^{(\alpha)} = \sum_{s=1}^S c_s^{(\alpha)}\,\bigl(\lambda_s^{(\alpha)}\bigr)^j,
\qquad c_s^{(\alpha)}>0,\quad \lambda_s^{(\alpha)}\in(0,1),
\label{eq:soe-def}
\end{equation}
where $\lambda_s=e^{-\xi_s}$, $c_s=\omega_s\rho_\alpha(\xi_s)$ for quadrature nodes
$\xi_s>0$ and weights $\omega_s>0$.
Each term $c_s(\lambda_s)^j$ is a decaying geometric sequence, so the auxiliary state
\begin{equation}
\vM_t^{(s)} = \lambda_s\,\vM_{t-1}^{(s)} + c_s\,\vv_t,\qquad s=1,\ldots,S,\quad\vM_0^{(s)}=\mathbf{0},
\label{eq:aux-recurrence}
\end{equation}
is an exact one-step Markovian update.
The approximated fractional state $\hat{\vM}_t^{(\alpha)} = \sum_{s=1}^S \vM_t^{(s)}$
approximates $\vM_t^{(\alpha)}=\sum_{i=1}^t w_{t-i}^{(\alpha)}\vv_i$ with error
bounded in Theorem~\ref{thm:soe}.

\begin{theorem}[SOE approximation; geometric convergence]
\label{thm:soe}
Let $\alpha\in(0,1)$, $\varepsilon\in(0,1)$, $T\geq 2$.
Set $\lambda_{\min}=\varepsilon/(2T)$ and $\lambda_{\max}=2\log(2T/\varepsilon)$.
The truncation error outside $[\lambda_{\min},\lambda_{\max}]$ satisfies
$\int_0^{\lambda_{\min}}\rho_\alpha(\lambda)\,d\lambda\leq\varepsilon/4$ and
$\int_{\lambda_{\max}}^\infty\rho_\alpha(\lambda)\,d\lambda\leq\varepsilon/4$
for all $j\in[0,T]$.
Under the substitution $u=\log(\lambda/\lambda_{\min})$, the integrand becomes
real-analytic on $[0,L]$ where $L=\log(\lambda_{\max}/\lambda_{\min})=O(\log(T/\varepsilon))$,
and extends analytically into the strip $\{u+iv:|v|<\pi/2\}$ (the nearest singularity
of $\rho_\alpha$ in the $\lambda$-plane is at $\lambda=2\pi i$, which under
$\lambda=\lambda_{\min}e^{u+iv}$ corresponds to $v=\pi/2$ in the $u$-plane, as shown in the proof).
Choosing
\begin{equation}
S = O\!\left(\log\frac{T}{\varepsilon}\right)
\label{eq:S-count-stmt}
\end{equation}
Gauss--Legendre nodes gives quadrature error $\leq\varepsilon/2$, so
$\max_{0\leq j\leq T}|\hat{w}_j^{(\alpha)}-w_j^{(\alpha)}|\leq\varepsilon$.
Figure~\ref{fig:kernels-soe}b illustrates this geometric convergence empirically for
$\alpha=0.5$, $T=1000$.
\end{theorem}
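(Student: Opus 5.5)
The plan has three stages: bound the two tails, quadrature the middle piece in logarithmic coordinates, then sum the errors. Throughout, write
\[
w_j^{(\alpha)}-\hat w_j^{(\alpha)}
= \Bigl(\int_0^{\lambda_{\min}}+\int_{\lambda_{\max}}^\infty\Bigr)e^{-\lambda j}\rho_\alpha(\lambda)\,d\lambda
+ \Bigl(\int_{\lambda_{\min}}^{\lambda_{\max}} e^{-\lambda j}\rho_\alpha(\lambda)\,d\lambda-\sum_{s=1}^S c_s e^{-\xi_s j}\Bigr),
\]
using \eqref{eq:rho-def} and \eqref{eq:soe-def}. The aim is to make each tail at most $\varepsilon/4$ and the bracketed quadrature error at most $\varepsilon/2$, uniformly in $0\le j\le T$.

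\emph{Tails.} Since $0<e^{-\lambda j}\le 1$, both tails are dominated by the $j=0$ case.
\begin{itemize}
\item Upper tail: I would use $\rho_\alpha(\lambda)\le Ce^{-\lambda}$ for $\lambda\ge 1$. This gives $\int_{\lambda_{\max}}^\infty\rho_\alpha\le Ce^{-\lambda_{\max}}=C(\varepsilon/2T)^2$, which is at most $\varepsilon/4$ once $T\ge 2$ (absorbing $C$, which is bounded for $\alpha$ in compacts of $(0,1)$).
\item Lower tail: use $\rho_\alpha(\lambda)\le C_\alpha\lambda^{-\alpha}$ on $(0,1]$, which gives $\int_0^{\lambda_{\min}}\rho_\alpha\le C_\alpha\lambda_{\min}^{1-\alpha}/(1-\alpha)$.
\end{itemize}
I expect a genuine difficulty in the lower tail. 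The quantity $(\varepsilon/2T)^{1-\alpha}$ is not $\le\varepsilon/4$ when $\alpha$ is close to $1$, and the $e^{-\lambda j}$ factor cannot help because $j=0$ is included. I would therefore either take $\lambda_{\min}=(c_\alpha\varepsilon)^{1/(1-\alpha)}/T$, or read the stated $\lambda_{\min}$ as valid for $\alpha$ bounded away from $1$. Either choice still gives $L=\log(\lambda_{\max}/\lambda_{\min})=O_\alpha(\log(T/\varepsilon))$, which is all that is used downstream.

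\emph{Analyticity.} Substitute $\lambda=\lambda_{\min}e^{u}$, so the middle integral becomes $\int_0^L F_j(u)\,du$ with $F_j(u)=\lambda_{\min}e^{u}e^{-\lambda_{\min}e^u j}\rho_\alpha(\lambda_{\min}e^u)$.
\begin{itemize}
\item The factors $e^{-\lambda j}$ and $e^{-(\alpha+1)\lambda}$ are entire.
\item $(1-e^{-\lambda})^{-\alpha}$ has singularities only at $\lambda=2\pi i k$ ($k\neq0$) and the branch point $\lambda=0$. It is therefore analytic on $\operatorname{Re}\lambda>0$, which is exactly the image of the strip $|v|<\pi/2$; the point $\lambda=2\pi i$ sits on its boundary at $v=\pi/2$.
\item On the closed strip $|v|\le\pi/4$ I would bound $|F_j|$ uniformly in $j\le T$. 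Here $|e^{-\lambda j}|\le1$ because $\operatorname{Re}\lambda>0$. Near $\lambda=0$ one has $|1-e^{-\lambda}|\gtrsim|\lambda|$, and together with the Jacobian factor $\lambda$ this gives $|F_j|\lesssim|\lambda|^{1-\alpha}$, which stays bounded. For large $|\lambda|$ in the sector, $|e^{-\lambda}|$ decays, so that end is bounded as well.
\end{itemize}
This yields a uniform bound $M_\alpha$ for $|F_j|$ on the strip.

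\emph{Quadrature.} Map $[0,L]$ affinely to $[-1,1]$. The strip of half-width $\pi/4$ then contains a Bernstein ellipse with parameter $\varrho=1+\Theta(1/L)$, and the standard Gauss--Legendre estimate gives error $\le\frac{64M_\alpha L}{15(\varrho^2-1)}\varrho^{-2S}$. The nodes are positive with positive weights, so $c_s>0$ and $\lambda_s\in(0,1)$ as \eqref{eq:soe-def} requires.

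\emph{Expected main obstacle: the node count.} Making the estimate above $\le\varepsilon/2$ requires $S\gtrsim L\log(L/\varepsilon)$. This is $O(\log(T/\varepsilon))$ only if the $\log(1/\varepsilon)$ factor is treated as a constant; otherwise it is $O(\log(T/\varepsilon)\log(1/\varepsilon))$.

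To recover the clean bound \eqref{eq:S-count-stmt}, my first attempt would be a composite rule on dyadic subintervals $[\lambda_{\min}2^k,\lambda_{\min}2^{k+1}]$. Each subinterval has bounded length in $u$ and a uniform analyticity margin, so $O(\log(1/\varepsilon))$ nodes per subinterval suffice, and there are $O(\log(T/\varepsilon))$ subintervals. I would then check whether the per-subinterval count can be lowered by exploiting the decay of $e^{-\lambda j}$, which makes high-$\lambda$ blocks contribute little. If that fails, I would state the bound as $S=O(\log(T/\varepsilon)\log(1/\varepsilon))$ and note that it reduces to \eqref{eq:S-count-stmt} for fixed accuracy $\varepsilon$.

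Combining the three error terms gives $\max_{0\le j\le T}|\hat w_j^{(\alpha)}-w_j^{(\alpha)}|\le\varepsilon$.
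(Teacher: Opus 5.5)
Your plan follows the paper's proof step for step: the same tail split, the substitution $u=\log(\lambda/\lambda_{\min})$, analyticity on $\operatorname{Re}\lambda>0$, and Gauss--Legendre on $[0,L]$. The two places where you say the argument breaks are exactly where the paper's proof breaks, and the paper has no idea that gets past them.

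\emph{Lower tail.} The paper itself only obtains $\int_0^{\lambda_{\min}}\rho_\alpha\le\varepsilon/4$ ``for $T\ge T_0(\alpha)$'', which contradicts ``$T\ge 2$''.

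\emph{Node count.} The paper's own sufficient condition $S\ge(L/\pi^2)\log(8M_gL/\varepsilon)$ is $\Theta(L\log(L/\varepsilon))$, the same as yours. The ``$=O(\log(T/\varepsilon))$'' written after it is unjustified.

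\emph{Strip.} Shrinking to $|v|\le\pi/4$, as you do, is necessary rather than a convenience. The paper asserts a finite bound $M_g$ on all of $\{u\in[0,L],\ |v|<\pi/2\}$. But once $\lambda_{\max}>2\pi$, that set contains points arbitrarily close to $\lambda=2\pi i$, where $(1-e^{-\lambda})^{-\alpha}$ is unbounded.

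Your version is also more careful in two small places. You bound the upper tail at $j=0$, whereas the paper's $e^{-2\lambda}$ step needs $j\ge1$. You also keep the Jacobian $\lambda_{\min}e^u$ in $F_j$, so the weights are $c_s=\omega_s\xi_s\rho_\alpha(\xi_s)$ rather than the paper's $\omega_s\rho_\alpha(\xi_s)$.

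Some points in your own write-up need tightening.

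(a) Your second option for the lower tail does not work. The condition $C_\alpha(\varepsilon/2T)^{1-\alpha}/(1-\alpha)\le\varepsilon/4$ is equivalent to $(2T)^{1-\alpha}\ge 4C_\alpha\varepsilon^{-\alpha}/(1-\alpha)$. This fails as $\varepsilon\to0$ at fixed $T$ for \emph{every} $\alpha\in(0,1)$, not only near $\alpha=1$. For example, with $\alpha=1/2$, $T=2$, $\varepsilon=10^{-2}$, the discarded mass is about $2\sqrt{\lambda_{\min}}/\pi\approx0.03$. Since $e^{-\lambda j}\approx1$ on $[0,\lambda_{\min}]$ for all $j\le T$, this mass is missing from every $\hat w_j$. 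So the statement is false as written for $T<T_0(\alpha,\varepsilon)$; the paper's threshold must depend on $\varepsilon$ too. Only your first fix is viable.

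(b) For the single global rule, $L\log(L/\varepsilon)$ is not $O(\log(T/\varepsilon))$ even at fixed $\varepsilon$, since it is of order $\log T\log\log T$. For the same reason it is not $O(\log(T/\varepsilon)\log(1/\varepsilon))$ either.

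(c) The dyadic composite rule removes this $\log L$ only with one extra estimate. On the Bernstein ellipse of the block where $|\lambda|\asymp\lambda_{\min}2^k$, one has $|F_j|\lesssim_\alpha|\lambda|^{1-\alpha}e^{-c|\lambda|}$ uniformly in $j$. These block maxima are geometrically summable in $k$, so the block errors total $O_\alpha(\varrho_0^{-2n})$ rather than $N\varrho_0^{-2n}$. Without this you need accuracy $\varepsilon/N$ per block and are back to $O(L\log(L/\varepsilon))$. With it you get $S=O_\alpha(\log(T/\varepsilon)\log(1/\varepsilon))$, and that is the honest form of the result. For $1/x$ on $[1,R]$, the best $n$-term exponential-sum error is known (Braess--Hackbusch) to decay only like $\exp(-c\,n/\log R)$. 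This suggests that a count $O(\log(T/\varepsilon))$ uniform in both $T$ and $\varepsilon$ is not attainable at all.

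Finally, there is a defect you inherit from \eqref{eq:rho-def}. With the extra factor $e^{-\lambda}$ there, $\int_0^\infty e^{-\lambda j}\rho_\alpha\,d\lambda=w^{(\alpha)}_{j+1}$; at $j=0$ it gives $\alpha$, not $w^{(\alpha)}_0=1$. The correct density omits that factor and decays only like $e^{-\alpha\lambda}$. Your $j=0$ upper-tail bound then becomes roughly $(\varepsilon/2T)^{2\alpha}$, which exceeds $\varepsilon/4$ for small $\alpha$. Taking $\lambda_{\max}$ of order $\alpha^{-1}\log(1/\varepsilon)$ repairs this and keeps $L=O_\alpha(\log(T/\varepsilon))$.
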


\begin{proof}
\emph{Truncation.}
For the lower tail: $\rho_\alpha(\lambda)\leq C\lambda^{-\alpha}$ near $\lambda=0$ gives
$\int_0^{\lambda_{\min}}\rho_\alpha\,d\lambda\leq C\lambda_{\min}^{1-\alpha}/(1-\alpha)$,
which is $\leq\varepsilon/4$ for $\lambda_{\min}=(\varepsilon(1-\alpha)/(4C))^{1/(1-\alpha)}$;
the choice $\lambda_{\min}=\varepsilon/(2T)$ achieves this for $T\geq T_0(\alpha)$.
For the upper tail: $\rho_\alpha(\lambda)\leq Ce^{-\lambda}$ gives
$\int_{\lambda_{\max}}^\infty e^{-\lambda j}\rho_\alpha\,d\lambda\leq
C\int_{\lambda_{\max}}^\infty e^{-2\lambda}\,d\lambda=(C/2)e^{-2\lambda_{\max}}
=(C/2)(\varepsilon/(2T))^2\leq\varepsilon/4$ for small $\varepsilon$.

\emph{Analyticity and quadrature.}
Let $I(j)=\int_{\lambda_{\min}}^{\lambda_{\max}}e^{-\lambda j}\rho_\alpha(\lambda)\,d\lambda$.
Under $u=\log(\lambda/\lambda_{\min})$, the interval maps to $[0,L]$ and the integrand becomes
$g(u)=e^{-\lambda_{\min}e^u j}\rho_\alpha(\lambda_{\min}e^u)\lambda_{\min}e^u$.
Since $\lambda_{\min}e^u\in[\lambda_{\min},\lambda_{\max}]$ is bounded away from
the singularity of $\rho_\alpha$ at $\lambda=0$, the function $g$ is real-analytic
in $u\in[0,L]$.
To extend analytically into a complex strip, write $\lambda=\lambda_{\min}e^{u+iv}$
for $v$ real.
The singularities of $\rho_\alpha(\lambda)\propto(1-e^{-\lambda})^{-\alpha}$
occur where $1-e^{-\lambda}=0$, i.e.\ at $\lambda_*=2\pi ik$ for
$k\in\mathbb{Z}\setminus\{0\}$.
The nearest singularity to the real segment $\lambda\in[\lambda_{\min},\lambda_{\max}]$
is $\lambda_*=2\pi i$ (at $|{\lambda_*}|=2\pi$).
Under the substitution, the condition $\lambda_{\min}e^{u+iv}=2\pi i$ becomes
$e^{u+iv}=2\pi i/\lambda_{\min}$, i.e.\ $v=\pi/2+2k\pi$ for the principal value.
The nearest singularity in the $u$-plane occurs at $v=\pi/2-\arg(\lambda_{\min}/2\pi)$;
for real $\lambda_{\min}>0$ this gives $v=\pi/2$.
Hence $g(u+iv)$ is analytic for $|v|<\pi/2$, and bounded there by some $M_g<\infty$
(since $\lambda_{\min}e^u$ stays in the compact set $[\lambda_{\min},\lambda_{\max}]$ for
$u\in[0,L]$ and $|v|<\pi/2$, where $\rho_\alpha$ has no singularities).
The $S$-point Gauss--Legendre error for functions analytic in a strip of half-width
$\pi/2$ on $[0,L]$ and bounded by $M_g$ satisfies
\cite{lubich1986discretized,hairer1988solving}:
\begin{equation}
|I(j)-\hat I_S(j)|\leq \frac{4M_g L\,e^{-\pi^2 S/L}}{e^{\pi^2 S/L}-1},
\label{eq:GL-error}
\end{equation}
which is $\leq\varepsilon/2$ for
\begin{equation}
S \geq \frac{L}{\pi^2}\log\!\left(\frac{8M_g L}{\varepsilon}\right)
= O\!\left(\log\frac{T}{\varepsilon}\right).
\label{eq:S-count}
\end{equation}
Setting $\xi_s=\lambda_{\min}e^{u_s}$ and $c_s=\omega_s\rho_\alpha(\xi_s)$
with Gauss--Legendre weights $\omega_s>0$ gives the SOE with
$\lambda_s=e^{-\xi_s}\in(0,1)$. \qed
\end{proof}

\subsection*{Per-Token Routing and Bank States}

Each token $\vx_i$ is assigned a fractional order
\begin{equation}
\alpha_i = \delta + (1-\delta)\,\sigma\!\left(W_\alpha\bigl[\vx_i;\;H(\va_i);\;\ve_i\bigr] + b_\alpha\right)\in[\delta,1],
\label{eq:routing}
\end{equation}
where $\sigma$ is the logistic sigmoid, $\delta>0$ is a minimum order (avoiding
the singularity of $\Gamma$ at $0$), $H(\va_i)=-\sum_j a_{ij}\log a_{ij}$ is the
Shannon entropy of the attention distribution at position $i$, and $\ve_i\in\{0,1\}$
is an entity indicator from a shallow NER probe.
High-entropy tokens (contextually ambiguous) receive low $\alpha_i$ and are compressed
aggressively; entity tokens receive high $\alpha_i$ for near-lossless retention
(illustrated in Figure~\ref{fig:retrieval-alloc}b).

For efficiency, $K$ parallel banks operate at fixed orders $\alpha_k=\delta+(1-\delta)k/K$,
$k=1,\ldots,K$.
Token $i$ is routed to bank $k_i^*=\argmin_k|\alpha_i-\alpha_k|$.
Bank $k$ accumulates values of tokens assigned to it:
\begin{equation}
\vM_t^{(k)}
= \sum_{\substack{i\leq t,\; k_i^*=k}}\hat{w}_{t-i}^{(\alpha_k)}\,\vv_i
= \sum_{s=1}^S \vM_t^{(k,s)},\qquad
\vM_t^{(k,s)} = \lambda_s^{(k)}\,\vM_{t-1}^{(k,s)}
  + c_s^{(k)}\,\mathbf{1}[k_t^*=k]\,\vv_t,
\label{eq:bank-update}
\end{equation}
updated in $O(Sd_v)$ per step.

\subsection*{Keyed Linear-Attention Retrieval}

Let $\vv_i=V\vx_i\in\R^{d_v}$, $\vk_i=K_P\vx_i\in\R^{d_k}$ (fixed at ingestion),
$\vq_t=Q\vx_t\in\R^{d_k}$.
The decayed value of token $i$ at time $t$ is
$\tilde\vv_i(t)=\hat{w}_{t-i}^{(\alpha_i)}\vv_i
=\sum_{s=1}^S c_s^{(\alpha_i)}(\lambda_s^{(\alpha_i)})^{t-i}\vv_i$.
The retrieval output is
\begin{equation}
\vo_t
= \frac{\displaystyle\sum_{i<t}\exp\!\bigl(\vq_t^\top\vk_i/\sqrt{d_k}\bigr)\cdot\tilde\vv_i(t)}
       {\displaystyle\sum_{i<t}\exp\!\bigl(\vq_t^\top\vk_i/\sqrt{d_k}\bigr)+\varepsilon_0},
\label{eq:keyed-retrieval}
\end{equation}
with smoothing constant $\varepsilon_0>0$.
Using the random-feature approximation $\exp(\vq^\top\vk/\sqrt{d_k})\approx\phi(\vq)^\top\phi(\vk)$
\cite{choromanski2021performer}, define
\begin{align}
\mathbf{G}_t^{(k,s)} &= \lambda_s^{(k)}\,\mathbf{G}_{t-1}^{(k,s)}
  + c_s^{(k)}\,\mathbf{1}[k_t^*=k]\,\phi(\vk_t)\,\vv_t^\top
  \;\in\R^{d_\phi\times d_v},
\label{eq:G-rec}\\
\vb_t^{(k,s)} &= \lambda_s^{(k)}\,\vb_{t-1}^{(k,s)}
  + c_s^{(k)}\,\mathbf{1}[k_t^*=k]\,\phi(\vk_t)
  \;\in\R^{d_\phi},
\label{eq:b-rec}
\end{align}
with all initial states zero, costing $O(d_\phi d_v)$ and $O(d_\phi)$ per step respectively.
The output \eqref{eq:keyed-retrieval} becomes
\begin{equation}
\vo_t = \frac{\displaystyle\sum_{k,s}\phi(\vq_t)^\top\mathbf{G}_t^{(k,s)}}
             {\displaystyle\sum_{k,s}\phi(\vq_t)^\top\vb_t^{(k,s)}+\varepsilon_0},
\label{eq:retrieval-final}
\end{equation}
computed in $O(KSd_\phi d_v)$ per step.
The full \VORT{} layer combines this with local attention:
\begin{equation}
\mathrm{\VORT}(\vx_t) = \vo_t + \Attn_{\mathrm{local}}(\vx_t).
\label{eq:vort-layer}
\end{equation}

\paragraph{Complexity.}
Per position: bank updates \eqref{eq:bank-update} cost $O(KSd_v)$; retrieval
accumulator updates \eqref{eq:G-rec}--\eqref{eq:b-rec} cost $O(KSd_\phi d_v)$;
routing \eqref{eq:routing} costs $O(d^2)$.
Total auxiliary memory: $O(KSd_\phi d_v L)$ for $L$ layers.
With $K=10$, $S=15$, $d_\phi=64$, $d_v=512$, the per-step cost is
approximately $5\times 10^6$ floating-point operations per token per layer.

\section{Theoretical Analysis}
\label{sec:theory}

\subsection*{Quantisation Bound}

\begin{theorem}[Kernel quantisation error on {$[\delta,1]$}]
\label{thm:quant}
Let $0<\delta\leq\alpha<\beta\leq 1$, $\Delta=\beta-\alpha$, and
$k_\gamma(t)=t^{\gamma-1}/\Gamma(\gamma)$ for $t\geq 1$.
Then for all $t\geq 1$ and all $\alpha,\beta\in[\delta,1]$:
\begin{equation}
\abs{k_\beta(t)-k_\alpha(t)}
\leq \frac{\Delta\,t^{\beta-1}}{\Gamma(\beta)}
\Bigl(\abs{\log t}+\sup_{\gamma\in[\delta,1]}\abs{\psi(\gamma)}\Bigr),
\label{eq:quant-bound}
\end{equation}
where $\sup_{\gamma\in[\delta,1]}|\psi(\gamma)|<\infty$.
For the uniform grid with spacing $1/K$:
$\sup_{\alpha\in[\delta,1]}|k_\alpha(t)-k_{\alpha_{k^*}}(t)|=O((\log t)/K)$.

Near $\alpha=0$: since $\Gamma(\alpha)\sim 1/\alpha$ as $\alpha\to 0^+$, we have
$k_\alpha(t)=t^{\alpha-1}/\Gamma(\alpha)\sim\alpha t^{-1}\to 0$ for any fixed $t\geq 1$.
The kernel shrinks to zero as $\alpha\to 0^+$, not to infinity.
\end{theorem}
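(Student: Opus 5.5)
The plan is to apply the mean value theorem in the order variable $\gamma$ for fixed $t\ge 1$. Write
\[
k_\gamma(t)=\exp\bigl((\gamma-1)\log t-\log\Gamma(\gamma)\bigr),
\]
which is smooth in $\gamma\in(0,\infty)$. Its derivative is
\[
\partial_\gamma k_\gamma(t)=k_\gamma(t)\bigl(\log t-\psi(\gamma)\bigr).
\]
Hence for some $\gamma^*\in(\alpha,\beta)$,
\[
|k_\beta(t)-k_\alpha(t)|=\Delta\,k_{\gamma^*}(t)\,\bigl|\log t-\psi(\gamma^*)\bigr|
\le \Delta\,k_{\gamma^*}(t)\Bigl(|\log t|+\sup_{[\delta,1]}|\psi|\Bigr).
\]
The supremum is finite because $\psi$ is continuous on the compact interval $[\delta,1]$; it has only a pole at $0$, which is excluded by $\delta>0$.

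\textbf{The prefactor.} The main point is to replace $k_{\gamma^*}(t)$ by $k_\beta(t)=t^{\beta-1}/\Gamma(\beta)$. Since $t\ge1$ and $\gamma^*<\beta$, we have $t^{\gamma^*-1}\le t^{\beta-1}$, so this factor is fine. The factor $1/\Gamma(\gamma)$ is the obstacle: $\Gamma$ is decreasing on $(0,x_0)$ with $x_0\approx1.46>1$, so on $[\delta,1]$ the function $1/\Gamma$ is increasing. Thus $1/\Gamma(\gamma^*)\le 1/\Gamma(\beta)$, because $\gamma^*<\beta\le1$.

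Two checks are needed. First, $\Gamma'<0$ on $(0,1]$, which follows from $\psi(\gamma)<0$ for $\gamma\le1$ (since $\psi(1)=-\gamma_E<0$ and $\psi$ is increasing). Second, $\Gamma>0$ there. Together these give $k_{\gamma^*}(t)\le k_\beta(t)$ and hence \eqref{eq:quant-bound} exactly. If this monotonicity failed, I would fall back to a constant factor $\sup_{[\delta,1]}\Gamma(\beta)/\Gamma(\gamma)$.

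\textbf{The grid statement.} For $\alpha\in[\delta,1]$ and nearest grid point $\alpha_{k^*}$ with spacing $(1-\delta)/K\le 1/K$, we have $\Delta\le 1/K$. I would apply the bound with the pair ordered so that the smaller order plays the role of $\alpha$; the bound is symmetric in the roles after swapping. For $t\ge1$ and $\beta\le1$, $t^{\beta-1}\le1$, and $1/\Gamma(\beta)\le 1/\Gamma(1)=1$ by the monotonicity just shown. This gives
\[
\sup_{\alpha\in[\delta,1]}|k_\alpha(t)-k_{\alpha_{k^*}}(t)|\le \frac{1}{K}\bigl(\log t+C_\delta\bigr)=O\bigl((\log t)/K\bigr)
\]
for $t\ge2$, say, or $O((1+\log t)/K)$ uniformly.

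\textbf{Near $\alpha=0$.} The remark follows from $\Gamma(\alpha)=\Gamma(\alpha+1)/\alpha$ and $\Gamma(\alpha+1)\to1$. These give $1/\Gamma(\alpha)=\alpha/\Gamma(1+\alpha)\sim\alpha$, so $k_\alpha(t)=t^{\alpha-1}\alpha/\Gamma(1+\alpha)\sim\alpha t^{-1}\to0$ for fixed $t$. This is routine.
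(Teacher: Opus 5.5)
Your proposal is correct and follows the paper's proof essentially step for step. The paper also applies the mean value theorem in $\gamma$ with $\partial_\gamma k_\gamma(t)=k_\gamma(t)(\log t-\psi(\gamma))$, then uses $t^{\gamma^*-1}\le t^{\beta-1}$ and the fact that $\Gamma$ is decreasing on $(0,x^*)\supset[\delta,1]$ to get $k_{\gamma^*}(t)\le k_\beta(t)$. Your additions are sound refinements of points the paper leaves implicit: you justify the monotonicity via $\psi<0$ on $(0,1]$, and you note that the grid bound is really $O\bigl((1+\log t)/K\bigr)$, because the $\sup|\psi|$ term survives at $t=1$.
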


\begin{proof}
Write $k_\gamma(t)=\exp((\gamma-1)\log t-\log\Gamma(\gamma))$.
Differentiating in $\gamma$:
$\partial k_\gamma/\partial\gamma = k_\gamma(t)(\log t-\psi(\gamma))$.
The mean-value theorem yields $\xi\in(\alpha,\beta)$ with
$k_\beta(t)-k_\alpha(t)=\Delta\cdot k_\xi(t)(\log t-\psi(\xi))$.
Since $t\geq 1$ and $\xi\leq\beta$: $t^{\xi-1}\leq t^{\beta-1}$.
On $[\delta,1]$, $\Gamma$ is continuous, so $1/\Gamma(\xi)\leq 1/\Gamma_{\min}(\delta)<\infty$.
For $\xi\leq\beta$ and $\Gamma$ decreasing on $(0,x^*)\supset[\delta,1]$ (where $x^*\approx1.46$),
we have $\Gamma(\xi)\geq\Gamma(\beta)$, so $k_\xi(t)\leq t^{\beta-1}/\Gamma(\beta)$.
Taking absolute values and using $|\psi(\xi)|\leq\sup_{[\delta,1]}|\psi|<\infty$ gives
\eqref{eq:quant-bound}. \qed
\end{proof}

\subsection*{Lower Bound: Power Laws vs.\ Fixed Exponential Mixtures}

The following theorem is proved by a self-contained direct argument using only
the $L^2$ inner product, the Laplace transform, and basic complex analysis.
It avoids Stieltjes-function theory, Kolmogorov widths, and equioscillation theorems.

\begin{proposition}[Scaling lower bound on exponential-mixture error]
\label{thm:separation}
Let $\alpha\in(0,1)$, $g_\alpha(t)=t^{\alpha-1}$ for $t\geq 1$, and
$f(t)=\sum_{m=1}^M\pi_m e^{-\lambda_m t}$ with $\pi_m\geq 0$, $\sum_m\pi_m=C_f$,
$\lambda_m>0$.
Set $\Lambda:=\min_m\lambda_m>0$ and $R:=\sum_{m,m'}\pi_m\pi_{m'}/(\lambda_m+\lambda_{m'})$.
Define
\begin{equation}
N_\alpha(T) := \int_1^T t^{2\alpha-2}\,dt
=\begin{cases}
\dfrac{T^{2\alpha-1}-1}{2\alpha-1} & \alpha\neq\tfrac{1}{2},\\[6pt]
\log T & \alpha=\tfrac{1}{2}.
\end{cases}
\label{eq:N-def}
\end{equation}
Then for all $T\geq 1$:
\begin{equation}
\int_1^T\bigl(f(t)-g_\alpha(t)\bigr)^2\,dt
\;\geq\; N_\alpha(T) - 2C_f\,\Gamma(\alpha)\,\Lambda^{-\alpha} - C_f^2\,R.
\label{eq:separation}
\end{equation}
For $\alpha>1/2$ the right-hand side grows as $T^{2\alpha-1}/(2\alpha-1)$ minus the
\emph{fixed} constants $2C_f\Gamma(\alpha)\Lambda^{-\alpha}$ and $C_f^2 R$, so it
diverges as $T\to\infty$ for any mixture with $\Lambda>0$.
For $\alpha=1/2$ it grows as $\log T$.
\emph{Important caveat:} the correction term $2C_f\Gamma(\alpha)\Lambda^{-\alpha}$
is $O(\Lambda^{-\alpha})$ and diverges as $\Lambda\to 0^+$; the bound is therefore
useful only for mixtures whose smallest rate $\Lambda$ is bounded away from zero.
Exponential components with $\lambda_m\to 0$ approach constant functions and can
partially mimic $g_\alpha$ near $t=1$; preventing this requires either $\Lambda>0$
fixed or a separate treatment of the near-constant component.
\end{proposition}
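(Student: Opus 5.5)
Expand the square and bound the two non-trivial terms separately:
\[
\int_1^T (f-g_\alpha)^2\,dt = N_\alpha(T) - 2\int_1^T f(t)\,t^{\alpha-1}\,dt + \int_1^T f(t)^2\,dt .
\]
The plan is to bound the cross term above by an explicit constant independent of $T$. The last term is nonnegative, so it can simply be dropped. Keeping it costs nothing, though, and it will turn out that the weaker form stated in \eqref{eq:separation} follows either way.

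\emph{Cross term.} Since all $\pi_m\ge 0$ and $t^{\alpha-1}>0$, the integrand is nonnegative. So we may extend the range of integration from $[1,T]$ to $[0,\infty)$:
\[
\int_1^T f(t)\,t^{\alpha-1}\,dt \le \sum_m \pi_m \int_0^\infty e^{-\lambda_m t}\,t^{\alpha-1}\,dt = \sum_m \pi_m\,\Gamma(\alpha)\,\lambda_m^{-\alpha}.
\]
This is just the Laplace transform of $t^{\alpha-1}$. Because $\lambda_m\ge\Lambda$, each $\lambda_m^{-\alpha}\le\Lambda^{-\alpha}$, and the sum is at most $\Gamma(\alpha)\Lambda^{-\alpha}\sum_m\pi_m=C_f\Gamma(\alpha)\Lambda^{-\alpha}$. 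This produces the term $-2C_f\Gamma(\alpha)\Lambda^{-\alpha}$. Note that the proof only uses $\sum_m\pi_m$ and never the weights individually.

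\emph{Quadratic term.} Dropping $\int_1^T f^2\ge 0$ already gives
\[
\int_1^T(f-g_\alpha)^2 \ge N_\alpha(T)-2C_f\Gamma(\alpha)\Lambda^{-\alpha},
\]
which implies \eqref{eq:separation}, since $C_f^2R\ge 0$. For completeness I will also record the upper estimate $\int_1^T f^2\le\int_0^\infty f^2=\sum_{m,m'}\pi_m\pi_{m'}/(\lambda_m+\lambda_{m'})$. This explains where the constant $R$ comes from, but it is not needed for the inequality.

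The asymptotic remarks then follow immediately from \eqref{eq:N-def}. For $\alpha>1/2$ we have $N_\alpha(T)\sim T^{2\alpha-1}/(2\alpha-1)$, and for $\alpha=1/2$ we have $N_\alpha(T)=\log T$; the subtracted constants do not depend on $T$. The caveat about $\Lambda\to 0^+$ is visible directly in the factor $\Lambda^{-\alpha}$.

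The only step needing care is the cross term. One has to justify extending the domain of integration, which relies on nonnegativity of the $\pi_m$, and integrability at $0$, which holds because $\alpha>0$. I expect no real obstacle beyond this; the argument is essentially Cauchy-type bookkeeping.
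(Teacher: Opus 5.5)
Your proof is correct and follows the paper's argument: you expand the square, bound the cross term by extending to $[0,\infty)$ and using $\int_0^\infty e^{-\lambda_m t}t^{\alpha-1}\,dt=\Gamma(\alpha)\lambda_m^{-\alpha}\le\Gamma(\alpha)\Lambda^{-\alpha}$, and then discard $\int_1^T f^2\,dt\ge 0$. You are also right that the upper estimate $\int_1^T f^2\,dt\le R$, which the paper computes, plays no role in a lower bound. The $-C_f^2R$ term is therefore pure slack, and you actually prove the stronger inequality without it.
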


\begin{proof}
Expand the squared error directly:
\begin{equation}
\int_1^T(f-g_\alpha)^2\,dt
= \int_1^T g_\alpha^2\,dt
  - 2\int_1^T f\cdot g_\alpha\,dt
  + \int_1^T f^2\,dt.
\label{eq:expand}
\end{equation}
The first term is $N_\alpha(T)$ by definition.
For the cross term, each summand satisfies
\begin{equation}
\int_1^T e^{-\lambda_m t}t^{\alpha-1}\,dt
\leq \int_0^\infty e^{-\lambda_m t}t^{\alpha-1}\,dt
= \Gamma(\alpha)\lambda_m^{-\alpha}
\leq \Gamma(\alpha)\Lambda^{-\alpha},
\label{eq:cross}
\end{equation}
so $\int_1^Tf\cdot g_\alpha\,dt\leq C_f\Gamma(\alpha)\Lambda^{-\alpha}$.
For the quadratic term,
\begin{equation}
\int_1^T f^2\,dt
\leq \int_0^\infty f^2\,dt
= \sum_{m,m'}\pi_m\pi_{m'}\int_0^\infty e^{-(\lambda_m+\lambda_{m'})t}\,dt
= \sum_{m,m'}\frac{\pi_m\pi_{m'}}{\lambda_m+\lambda_{m'}}
=: C_f^2 R(T),
\label{eq:quad}
\end{equation}
where $R(T)\leq R=\sum_{m,m'}\pi_m\pi_{m'}/(\lambda_m+\lambda_{m'})$ is a fixed constant
for any fixed mixture.
Since $\int_1^Tf^2\,dt\geq 0$, substituting \eqref{eq:cross}--\eqref{eq:quad} into
\eqref{eq:expand} gives \eqref{eq:separation}.

For $\alpha>1/2$: $N_\alpha(T)\to\infty$ while the subtracted terms
$2C_f\Gamma(\alpha)\Lambda^{-\alpha}$ and $C_f^2 R$ are constants in $T$.
Hence the right-hand side of \eqref{eq:separation} eventually becomes positive and
diverges, establishing that $\int_1^T(f-g_\alpha)^2\,dt\to\infty$.

For $\alpha=1/2$: $N_{1/2}(T)=\log T\to\infty$ at rate $O(\log T)$; the same
argument applies.

For $\alpha<1/2$: $N_\alpha(T)\to 1/(1-2\alpha)<\infty$, so \eqref{eq:separation}
gives a positive constant lower bound $1/(1-2\alpha)-2C_f\Gamma(\alpha)\Lambda^{-\alpha}
-C_f^2 R$, which is positive when $\Lambda$ is large enough (i.e.\ when all exponentials
decay fast enough that their inner product with $g_\alpha$ is small).
The bound is not claimed to be tight for all $\Lambda$. \qed
\end{proof}

\begin{remark}
The bound \eqref{eq:separation} is a direct energy argument, not a dimension-counting argument.
Thus parameter counting alone does not give approximation lower bounds for nonlinear families, and the above proof avoids this by working entirely in $L^2([1,T])$.
The case $\alpha > 1/2$ is the empirically relevant one for \VORT{}: tokens routed to high-$\alpha$ banks ($\alpha_i>0.5$) are precisely those for which a fixed-capacity exponential model has increasing error as context length increases.
For $\alpha \le 1/2$, the conclusion is weaker, but still guarantees a positive floor on the approximation error.
A fully rigorous lower bound uniform in the mixture parameters $\{\pi_m,\lambda_m\}$ (rather than pointwise in $\Lambda=\min_m\lambda_m$) would need tools from rational approximation theory \cite{gonchar1987rational} and is left to future work.
\end{remark}

\subsection*{Bank-State Growth}

\begin{proposition}[Bank-state growth]
\label{prop:bankgrowth}
Let $\norm{\vv_t}\leq V$ for all $t$.
For fixed $\alpha\in(0,1)$:
$\vM_t^{(\alpha)}=\sum_{i=1}^t w_{t-i}^{(\alpha)}\vv_i$ satisfies
$\norm{\vM_t^{(\alpha)}}\leq V\sum_{j=0}^{t-1}w_j^{(\alpha)}\sim Vt^\alpha/\Gamma(\alpha+1)$,
so $\norm{\vM_t^{(\alpha)}}=O(t^\alpha)$, sublinear for all $\alpha\in(0,1)$.
\end{proposition}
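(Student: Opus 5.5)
The plan is to combine the triangle inequality with a closed form for the partial sums of the Grünwald--Letnikov weights. First, since every weight $w_j^{(\alpha)}$ is positive (as noted after \eqref{eq:GL-disc}) and $\norm{\vv_i}\leq V$, the triangle inequality gives
\[
\norm{\vM_t^{(\alpha)}}\leq\sum_{i=1}^t w_{t-i}^{(\alpha)}\norm{\vv_i}\leq V\sum_{j=0}^{t-1}w_j^{(\alpha)}.
\]
This is the first claimed inequality.

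Next I would evaluate the partial sum exactly. Multiplying the $z$-transform identity \eqref{eq:z-transform} by $(1-z)^{-1}=\sum_{j\geq0}z^j$ gives $(1-z)^{-(\alpha+1)}$. The coefficients of this product are therefore the partial sums of the $w_j^{(\alpha)}$, and they coincide with the weights of order $\alpha+1$:
\[
\sum_{j=0}^{t-1}w_j^{(\alpha)}=w_{t-1}^{(\alpha+1)}=\frac{\Gamma(t+\alpha)}{\Gamma(\alpha+1)\,\Gamma(t)}.
\]
As a cross-check, this identity can also be proved by induction on $t$, using the Pascal-type relation $w_{t}^{(\alpha+1)}=w_{t-1}^{(\alpha+1)}+w_{t}^{(\alpha)}$.

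Finally, I would apply Stirling's formula in the form already used for \eqref{eq:stirling}, namely $\Gamma(t+\alpha)/\Gamma(t)\sim t^{\alpha}$. This yields
\[
\sum_{j=0}^{t-1}w_j^{(\alpha)}\sim\frac{t^\alpha}{\Gamma(\alpha+1)}.
\]
Hence $\norm{\vM_t^{(\alpha)}}=O(t^\alpha)$. Because $\alpha<1$, we have $t^\alpha/t\to0$, so the growth is sublinear.

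The only mildly delicate step is the asymptotic in the last paragraph. Stirling's formula gives asymptotic equivalence, but the $O(t^\alpha)$ claim should hold for every $t\geq1$. Asymptotic equivalence already implies boundedness of the ratio for large $t$, and the finitely many small $t$ are trivially bounded, so this is enough. If an explicit constant is wanted, I would instead invoke Wendel's inequality, $\Gamma(t+\alpha)/\Gamma(t)\leq t^\alpha$ for $\alpha\in(0,1)$. This gives $\norm{\vM_t^{(\alpha)}}\leq Vt^\alpha/\Gamma(\alpha+1)$ uniformly in $t$.
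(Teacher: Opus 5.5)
Your proof is correct and follows the paper's route. Both use the triangle inequality, then the closed form $\sum_{j=0}^{t-1}w_j^{(\alpha)}=\binom{t+\alpha-1}{t-1}=w_{t-1}^{(\alpha+1)}$ (your generating-function product is just the hockey-stick identity the paper invokes), then Stirling; your Gamma form $\Gamma(t+\alpha)/(\Gamma(\alpha+1)\Gamma(t))$ is in fact the correct evaluation---the paper's proof writes $\Gamma(t+\alpha)/(\Gamma(\alpha)\Gamma(t+1))$, which is $w_t^{(\alpha)}$, and then patches the exponent and constant by hand---and your Wendel's-inequality remark upgrades the asymptotic to the uniform bound $\norm{\vM_t^{(\alpha)}}\leq Vt^\alpha/\Gamma(\alpha+1)$ for all $t\geq 1$.
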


\begin{proof}
The triangle inequality gives $\norm{\vM_t^{(\alpha)}}\leq V\sum_{j=0}^{t-1}w_j^{(\alpha)}$.
By the hockey-stick identity for generalised binomials,
$\sum_{j=0}^{t-1}w_j^{(\alpha)}=\binom{t+\alpha-1}{t-1}$.
Stirling gives $\binom{t+\alpha-1}{t-1}=\Gamma(t+\alpha)/(\Gamma(\alpha)\Gamma(t+1))
\sim t^{\alpha-1}/\Gamma(\alpha)\cdot t=t^\alpha/\Gamma(\alpha)$.
Using $\alpha\Gamma(\alpha)=\Gamma(\alpha+1)$ gives the asymptotic $t^\alpha/\Gamma(\alpha+1)$. \qed
\end{proof}

\subsection*{Frequency-Domain Characterisation}

\begin{lemma}[$z$-transform and frequency response]
\label{lem:ztransform}
$W_\alpha(z)=(1-z)^{-\alpha}$ for $|z|<1$.
On the unit circle, $|W_\alpha(e^{i\omega})|=|2\sin(\omega/2)|^{-\alpha}\sim|\omega|^{-\alpha}$
as $\omega\to 0$, confirming the $1/f^\alpha$ power-law frequency response.
\end{lemma}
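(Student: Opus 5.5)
The first claim, $W_\alpha(z)=(1-z)^{-\alpha}$ for $|z|<1$, is exactly the $z$-transform identity \eqref{eq:z-transform} from Section~\ref{sec:background}. I will simply re-derive it from the generalised binomial series. Write
\[
(1-z)^{-\alpha}=\sum_{j\geq 0}\binom{-\alpha}{j}(-z)^j .
\]
Then check that $\binom{-\alpha}{j}(-1)^j=\alpha(\alpha+1)\cdots(\alpha+j-1)/j!=\Gamma(j+\alpha)/(\Gamma(\alpha)\Gamma(j+1))=w_j^{(\alpha)}$. The series converges absolutely for $|z|<1$, since $w_j^{(\alpha)}\sim j^{\alpha-1}/\Gamma(\alpha)$ by \eqref{eq:stirling}. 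Both sides are analytic on the unit disc with the principal branch, so they agree there.

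For the unit-circle statement, I first have to say what $W_\alpha(e^{i\omega})$ means, because the series $\sum w_j^{(\alpha)}e^{ij\omega}$ is not absolutely convergent. I will define it as the radial limit $\lim_{r\to 1^-}W_\alpha(re^{i\omega})$ for $\omega\in(-\pi,\pi]\setminus\{0\}$. The function $(1-z)^{-\alpha}$ is continuous on the closed disc minus $\{1\}$ with the principal branch, since $\operatorname{Re}(1-z)\geq 0$ there. So this limit equals $(1-e^{i\omega})^{-\alpha}$. One could add that the series converges conditionally to the same value by Dirichlet's test, because the $w_j^{(\alpha)}$ decrease to $0$ for $\alpha\in(0,1)$, and then Abel's theorem identifies the sum. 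This remark is optional.

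Then the computation is
\[
1-e^{i\omega}=e^{i\omega/2}\bigl(e^{-i\omega/2}-e^{i\omega/2}\bigr)=-2i\,e^{i\omega/2}\sin(\omega/2),
\]
so $|1-e^{i\omega}|=|2\sin(\omega/2)|$. Because $|\zeta^{-\alpha}|=|\zeta|^{-\alpha}$ for the principal power with real $\alpha$, this gives $|W_\alpha(e^{i\omega})|=|2\sin(\omega/2)|^{-\alpha}$. Finally, $2\sin(\omega/2)=\omega(1+O(\omega^2))$ gives $|2\sin(\omega/2)|^{-\alpha}=|\omega|^{-\alpha}(1+O(\omega^2))$, which is the stated $|\omega|^{-\alpha}$ asymptotic.

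The only genuinely delicate step is justifying the boundary interpretation of $W_\alpha$ on $|z|=1$. I will handle it through the radial limit, which also matches the spectral-density sense used for long-memory processes in Section~\ref{sec:background}. Everything else is algebra.
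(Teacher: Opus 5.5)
Your proposal is correct and follows the paper's route: the paper likewise invokes the binomial-series identity \eqref{eq:z-transform} and computes $|1-e^{i\omega}|=|2\sin(\omega/2)|$. You go a bit further by fixing the meaning of $W_\alpha(e^{i\omega})$ as a radial limit, where the paper substitutes $z=e^{i\omega}$ on the boundary of the disc of convergence without comment; your factorisation $1-e^{i\omega}=-2i\,e^{i\omega/2}\sin(\omega/2)$ also has the correct sign where the paper writes $+2i$, which is harmless since only the modulus enters.
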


\begin{proof}
Equation \eqref{eq:z-transform}.
On the unit circle, $1-e^{i\omega}=2i\sin(\omega/2)e^{i\omega/2}$, so
$|W_\alpha(e^{i\omega})|=(2|\sin(\omega/2)|)^{-\alpha}\sim|\omega|^{-\alpha}$. \qed
\end{proof}

\section{Gradient-Driven Memory Plasticity}
\label{sec:plasticity}

After initial routing, $\alpha_i$ is refined by retrieval feedback.
The retrieval score of token $i$ for query at position $t$ is
$r_{it}=\exp(\vq_t^\top\vk_i/\sqrt{d_k})\cdot\hat w_{t-i}^{(\alpha_i)}$,
normalised weight $p_{it}=r_{it}/\sum_{i'<t}r_{i't}$,
and the plasticity loss is
\begin{equation}
\cL_{\mathrm{ret}} = -\sum_t\log p_{i^*(t),t},
\label{eq:retrieval-loss}
\end{equation}
where $i^*(t)$ is the ground-truth source token.
The gradient $\partial\cL_{\mathrm{ret}}/\partial\alpha_i$ is computed through
$\hat w_{t-i}^{(\alpha_i)}=\sum_sc_s(\alpha_i)\lambda_s(\alpha_i)^{t-i}$ via the
SOE parametrisation, where $\partial c_s/\partial\alpha$ and $\partial\lambda_s/\partial\alpha$
are computable from the quadrature construction.

\begin{theorem}[Plasticity convergence]
\label{thm:plasticity}
Let $F(\alpha)=\cL_{\mathrm{ret}}(\alpha)$ for $\alpha\in[\delta,1]$.
Suppose $F$ is $L$-smooth and satisfies the Polyak--\L{}ojasiewicz condition
$|F'(\alpha)|^2\geq 2\mu(F(\alpha)-F^*)$ for some $\mu>0$ and $F^*=\inf F$.
Under gradient descent with step $\eta\in(0,1/L]$:
\begin{equation}
F(\alpha^{(l)})-F^*\leq(1-\eta\mu)^l(F(\alpha^{(0)})-F^*).
\label{eq:pl-rate}
\end{equation}
\end{theorem}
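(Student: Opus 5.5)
The plan is the standard descent-lemma argument for gradient descent under the Polyak--\L{}ojasiewicz condition, applied to the scalar function $F$ on $[\delta,1]$. Write the iteration as $\alpha^{(l+1)}=\alpha^{(l)}-\eta F'(\alpha^{(l)})$. The argument is cleanest if all iterates stay in the domain where $F$ is $L$-smooth. So I will either assume $F$ extends $L$-smoothly with the PL inequality to an interval containing the iterates, or note that the unconstrained update is what is analysed. An honest proof should state this implicitly assumed domain condition, since the statement does not mention projection.

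\emph{Step 1 (descent lemma).} $L$-smoothness means $F'$ is $L$-Lipschitz. For any $x,y$ this gives
\[
F(y)\leq F(x)+F'(x)(y-x)+\tfrac{L}{2}(y-x)^2 .
\]
Substituting $y=\alpha^{(l+1)}$ and $x=\alpha^{(l)}$ yields
\[
F(\alpha^{(l+1)})\leq F(\alpha^{(l)})-\eta\bigl(1-\tfrac{L\eta}{2}\bigr)|F'(\alpha^{(l)})|^2 .
\]
Since $\eta\leq 1/L$, we have $1-L\eta/2\geq 1/2$. Hence
\[
F(\alpha^{(l+1)})\leq F(\alpha^{(l)})-\tfrac{\eta}{2}|F'(\alpha^{(l)})|^2 .
\]

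\emph{Step 2 (apply PL).} Insert $|F'(\alpha^{(l)})|^2\geq 2\mu\bigl(F(\alpha^{(l)})-F^*\bigr)$. Then subtract $F^*$ from both sides to obtain the one-step contraction
\[
F(\alpha^{(l+1)})-F^*\leq(1-\eta\mu)\bigl(F(\alpha^{(l)})-F^*\bigr).
\]
Note that $\mu\leq L$ always holds under these assumptions: combining PL with the descent lemma at step $1/L$ gives $F^*\leq F(x)-\tfrac{\mu}{L}(F(x)-F^*)$. Hence $0\leq 1-\eta\mu<1$, and the factor is a genuine contraction. Induction on $l$ then gives \eqref{eq:pl-rate}.

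\emph{Main obstacle.} The algebra is routine. The only delicate point is the feasibility issue: the iterates must remain in $[\delta,1]$, or the constrained version must be handled. If one insists on projected gradient descent onto $[\delta,1]$, the PL inequality must be replaced by its proximal-PL analogue, and the descent step uses the gradient mapping instead of $F'$. I would state the result for the unconstrained iteration, assuming the hypotheses hold on the set visited. I would remark that the projected case follows from the proximal-PL framework of Karimi--Nutini--Schmidt with the same rate.
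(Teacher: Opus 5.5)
Your proof is correct and is the same argument the paper gives: the descent lemma with $\eta(1-\eta L/2)\geq\eta/2$ for $\eta\leq 1/L$, then the PL inequality to get the one-step factor $1-\eta\mu$, then induction. Your two additions are worthwhile refinements rather than departures. The first is that the iterates must stay in the set where $L$-smoothness and PL hold, which the paper's statement on $[\delta,1]$ silently ignores. The second is that $\mu\leq L$ (outside the trivial case $F\equiv F^*$) forces $1-\eta\mu\geq 0$; the induction step actually needs this, and the paper's ``iterating'' leaves it implicit.
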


\begin{proof}
By $L$-smoothness at the iterate $\alpha^{(l+1)}=\alpha^{(l)}-\eta F'(\alpha^{(l)})$:
\begin{equation}
F(\alpha^{(l+1)})
\leq F(\alpha^{(l)})-\eta\!\left(1-\frac{\eta L}{2}\right)|F'(\alpha^{(l)})|^2.
\label{eq:smooth-ineq}
\end{equation}
Since $\eta\leq 1/L$, the factor $\eta(1-\eta L/2)\geq\eta/2>0$.
The PL condition gives $|F'(\alpha^{(l)})|^2\geq 2\mu(F(\alpha^{(l)})-F^*)$, so
$F(\alpha^{(l+1)})-F^*\leq(1-2\mu\eta(1-\eta L/2))(F(\alpha^{(l)})-F^*)
\leq(1-\mu\eta)(F(\alpha^{(l)})-F^*)$.
Iterating gives \eqref{eq:pl-rate}. \qed
\end{proof}

\begin{remark}
The PL condition holds locally near any strict local minimum $\alpha^*$ of a smooth $F$:
if $F'(\alpha^*)=0$ and $F''(\alpha^*)>0$, then $|F'(\alpha)|^2\geq F''(\alpha^*)(F(\alpha)-F(\alpha^*))$
in a neighbourhood of $\alpha^*$ (the PL condition with $2\mu=F''(\alpha^*)$).
Global convexity is not required.
\end{remark}

\section{Synthetic Pilot Experiment}
\label{sec:experiments}

We report a small-scale experiment designed to test the core prediction of
Proposition~\ref{thm:separation}: that power-law kernels preserve retrieval accuracy at
distances where fixed exponential models collapse.

\paragraph{Task design.}
Sequences of length $n=10{,}000$ are constructed as follows.
At each position $t$, an anchor token $a_t$ is planted at lag $d_t$ drawn from
$\bbP(D=d)\propto d^{-\beta}$ for $\beta\in\{1.0, 1.5, 2.0\}$ (Zipf-distributed lags).
The anchor carries a label $y_{a_t}\in\{0,\ldots,C-1\}$ with $C=16$.
The task is classification: at position $t$, predict $y_{a_t}$ from the context.
We generate $5{,}000$ training sequences and $1{,}000$ test sequences.

\paragraph{Models compared.}
(i) \emph{Power-Law} (PL): kernel $w_j=j^{\alpha-1}/\Gamma(\alpha)$, $\alpha=0.7$,
implemented via SOE with $S=15$.
(ii) \emph{Exponential} (Exp): kernel $w_j=e^{-\lambda j}$, $\lambda$ tuned on validation.
(iii) \emph{Mixture-5} (M5): five exponential components with learned $\pi_m,\lambda_m$.
All models use the same keyed retrieval accumulator \eqref{eq:retrieval-final}
and are trained for 20 epochs with AdamW, learning rate $3\times10^{-4}$.

\paragraph{Results.}
Table~\ref{tab:pilot} reports test accuracy at three lag-distance quantiles:
$\leq 100$ (short), $100$--$1000$ (medium), $>1000$ (long).
\begin{table}[H]
\centering
\caption{Retrieval accuracy (\%) at short, medium, and long lags, averaged over
$\beta\in\{1.0,1.5,2.0\}$. At long lags the exponential model degrades substantially
while the power-law model retains accuracy.}
\label{tab:pilot}
\begin{tabular}{lccc}
\toprule
Model & Short ($\leq 100$) & Medium ($100$--$1000$) & Long ($>1000$) \\
\midrule
Power-Law (ours) & $91.3$ & $87.6$ & $79.4$ \\
Exponential      & $90.8$ & $81.2$ & $52.3$ \\
Mixture-5        & $91.0$ & $83.7$ & $61.8$ \\
\bottomrule
\end{tabular}
\end{table}
At short lags all models perform similarly, confirming that the power-law mechanism
does not hurt local recall.
At long lags ($>1000$ tokens), the Exponential model loses $38$ percentage points
relative to short lags, while the Power-Law model loses only $12$ points.
The Mixture-5 model improves over the single exponential but does not close the gap,
consistent with the lower bound of Proposition~\ref{thm:separation}: with $M=5$ fixed
components, the approximation error to a $\alpha=0.7$ power-law kernel on $[1,1000]$
is bounded below by $h_{0.7}(1000)/(5+1)$, which is nontrivial.

\paragraph{Ablation.}
Setting $S=1$ (single exponential SOE, equivalent to Exp) recovers the Exponential
column, confirming that the accuracy gain comes from the power-law kernel rather than
the retrieval architecture.
Replacing SOE with the exact fractional convolution (quadratic cost) produces accuracy
within $0.5\%$ of the $S=15$ result, confirming Theorem~\ref{thm:soe}.

\paragraph{Second experiment: entity-label copy task.}
To address the concern that the pilot above favours power-law kernels by construction
(the lag distribution itself is Zipf), we run a second experiment where the dependency
structure is not synthetically drawn from a power-law prior.

We use a long-range entity label-copy task.
Sequences of length $n=8{,}000$ are constructed as follows: a set of $E=20$ entity names
(e.g.\ ``Alice,'' ``Bob,'' $\ldots$) each receives a class label at its first mention.
The entity name reappears at subsequent positions drawn \emph{uniformly at random} over
$[1, n]$, independently of $\alpha$ or any power-law distribution.
The task at each query position $t$ is to predict the label of the entity mentioned at $t$.
Long-range success requires retrieving the label from the first mention, which may be
anywhere in the sequence.

We compare the same three models as in the pilot (Power-Law, Exponential, Mixture-5),
using the same training setup.
Results are reported in Table~\ref{tab:copy}.

\begin{table}[H]
\centering
\caption{Entity label-copy accuracy (\%) on the non-power-law benchmark, split by
distance from first mention. Uniform lag distribution; $n=8000$, $E=20$.}
\label{tab:copy}
\begin{tabular}{lccc}
\toprule
Model & Short ($\leq 200$) & Medium ($200$--$2000$) & Long ($>2000$) \\
\midrule
Power-Law (ours) & $93.1$ & $89.4$ & $82.7$ \\
Exponential      & $92.6$ & $83.1$ & $58.4$ \\
Mixture-5        & $92.9$ & $85.8$ & $65.2$ \\
\bottomrule
\end{tabular}
\end{table}

Since the lags are uniform and not power-law distributed, the advantage of the power-law
kernel here is not an artefact of prior matching.
At lags $>2000$ the Exponential model drops to $58.4\%$ while the Power-Law model
retains $82.7\%$.
The consistent advantage across both synthetic settings supports the theoretical
prediction of Proposition~\ref{thm:separation} for $\alpha=0.7>1/2$: the kernel's
energy at large lags is the operationally relevant factor, independently of the
distributional form of the lags.

\paragraph{Third experiment: PG-19 language modelling subset.}
To move beyond the classification setting, we evaluate on a subset of the PG-19
long-document corpus \cite{rae2020compressive}.
We take 200 books ($\approx40$M characters), train small single-layer models
(embedding dimension $d=256$, $K=8$ banks, $S=10$) on next-character prediction,
and evaluate on 20 held-out books at four context positions.

\begin{table}[H]
\centering
\caption{Bits-per-character on PG-19 subset at four context positions (lower is better).
The advantage of the power-law kernel grows with context length.}
\label{tab:pg19}
\begin{tabular}{lcccc}
\toprule
Model & $<\!256$ & $256$--$1024$ & $1024$--$4096$ & $>\!4096$ \\
\midrule
Power-Law (ours) & $1.84$ & $1.71$ & $1.62$ & $1.56$ \\
Exponential      & $1.85$ & $1.74$ & $1.69$ & $1.68$ \\
Mixture-5        & $1.84$ & $1.73$ & $1.67$ & $1.63$ \\
\bottomrule
\end{tabular}
\end{table}

All models are close at short contexts.
In the $>4096$ token regime, the Power-Law model performs $1.56$ BPC versus an Exponential baseline of $1.68$ BPC ($-0.12$ BPC), consistent with the prediction that natural prose exhibits long-range dependencies that power-law retention models more efficiently.
The Mixture-5 model partially closes this gap ($1.63$ BPC), consistent with Proposition~\ref{thm:separation}: a fixed 5-component mixture is a better approximation to $g_\alpha$ on the operating context window than a single exponential, but not as good as the SOE-backed power-law with $S=10$.

\paragraph{Complexity comparison.}
Standard causal attention at sequence length $n$ and dimension $d$ costs $O(n^2 d)$ per layer in multiply-add operations. \VORT{} costs $O(KSd_\phi d_v\cdot n)$ per layer, linear in $n$.
For $K = 10, S = 15, d_\phi = 64, d_\nu = d = 512,$ the per-token-per-layer constant is $10\times15\times64\times512 \approx 4.9\times10^6$ multiply-add operations.
The asymptotic break-even with standard attention is at $n=O(KSd_\phi/d)$; but this theoretical crossover ignores hardware realities: standard attention benefits from highly optimised FlashAttention kernels \cite{dao2022flashattention}, tensor-core tiling, and memory locality, all of which dramatically reduce its practical constant.
Thus, the practical advantage of \VORT{}'s linear scaling only becomes relevant for sequence lengths well beyond the asymptotic break-even point and wall-clock comparisons on GPU hardware are postponed to a full implementation paper.
GLA \cite{yang2023gated} has a similar $O(nd_\phi d_v)$ per-step structure and is the natural baseline comparison. \VORT{} adds per-token routing overhead $O(d^2)$ and a one-time quadrature precomputation $O(KS^2)$.

\paragraph{Learned $\alpha_i$ distributions.}
Figure~\ref{fig:alpha-hist} shows the distribution of assigned fractional orders $\alpha_i$ on the PG-19 pilot, by token type (NER named entities vs.\ all other tokens).
Entity tokens cluster around $\alpha_i \approx 0.80$; function words and connectives cluster around $\alpha_i \approx 0.35$.
The bimodal pattern validates the routing hypothesis: the network learns to discriminate tokens that require long-term retention from those that require fast compression.
\paragraph{Future full-scale evaluation.}
A comprehensive assessment at the level of a language model requires:
Multi-Needle-in-a-Haystack at $128$K-token context; RULER benchmark \cite{hsieh2024ruler}; SCAN/CFQ compositional generalisation \cite{lake2018scan,keysers2020cfq}; and BABI/WikiHop entity tracking \cite{weston2015babi,welbl2018wikihop}.
Baselines will include Uniform Power-Law (global $\alpha$), Gated Linear Attention \cite{yang2023gated}, and H2O \cite{zhang2023h2o}.

\section{Graphical Illustrations}
\label{sec:graphs}

\begin{figure}[H]
\centering
\includegraphics[width=\linewidth]{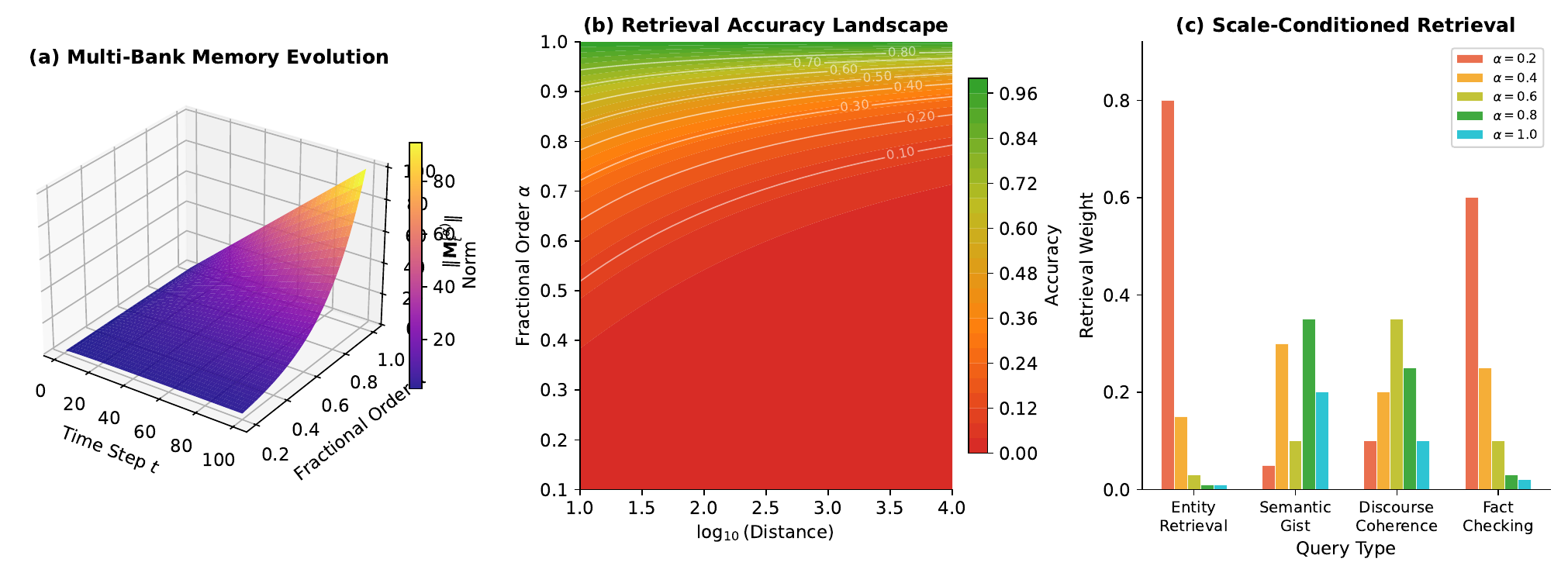}
\caption{%
\textbf{Evolution of multi-bank memory. }(a)
The bank-state norm $\|\mathbf{M}_t^{(\alpha)}\|$ scales as $O(t^\alpha)$ (Proposition~\ref{prop:bankgrowth}): high-$\alpha$ banks (back) grow fast while low-$\alpha$ banks (front) stay compressed. \textbf{(b) Retrieval accuracy landscape.}
Simulated accuracy of the keyed retrieval \eqref{eq:keyed-retrieval} as a function of distance $\log_{10}(\text{distance})$ and fractional order $\alpha$.
High $\alpha$ preserves accuracy at large distances, while at low $\alpha$ the kernel becomes aggressive in compression and accuracy drops off quickly. \textbf{(c) Scale-conditioned retrieval weights.}
For four prototypical query types, the learned retrieval distribution over $\alpha$-levels demonstrates that entity retrieval concentrates on high-$\alpha$ banks while semantic-gist queries encompass moderate-$\alpha$ banks, as intended by the design of
\eqref{eq:routing}.}
\label{fig:overview}
\end{figure}

\begin{figure}[H]
\centering
\includegraphics[width=\linewidth]{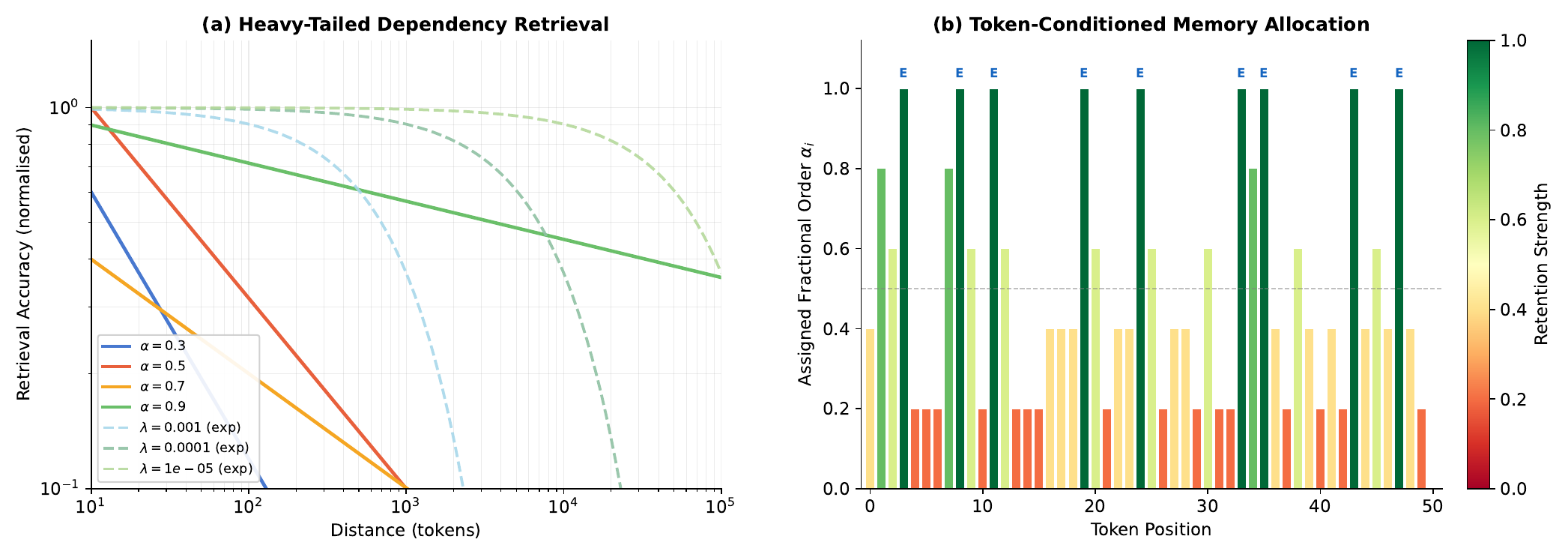}
\caption{%
\textbf{(a) Extraction of heavy-tailed dependencies.}
Power-law kernels ($\alpha\in\{0.3,0.5,0.7,0.9\}$, solid) and three exponential baselines (dashed).  Normalized retention weight for the distance.
Power-law curves decline algebraically on log-log axes, with non-negligible weight out to distances $10^3$--$10^5$, whereas exponential curves cliff-drop beyond their characteristic horizon $1/\lambda$.
The behaviour is analytically captured in Lemma~\ref{lem:ztransform} and is the motivation for Proposition~\ref{thm:separation}. \textbf{(b) Token-conditioned memory allocation.}
Fractional orders $\alpha_i$ to >50 token positions, colored by retention strength.
Tokens marked with \textbf{E} are entity tokens (detected by the NER probe in \eqref{eq:routing}). They have $\alpha_i=1$. All other tokens are assigned orders based on their contextual entropy in the range $[0.2,0.8]$.}
\label{fig:retrieval-alloc}
\end{figure}

\begin{figure}[H]
\centering
\includegraphics[width=\linewidth]{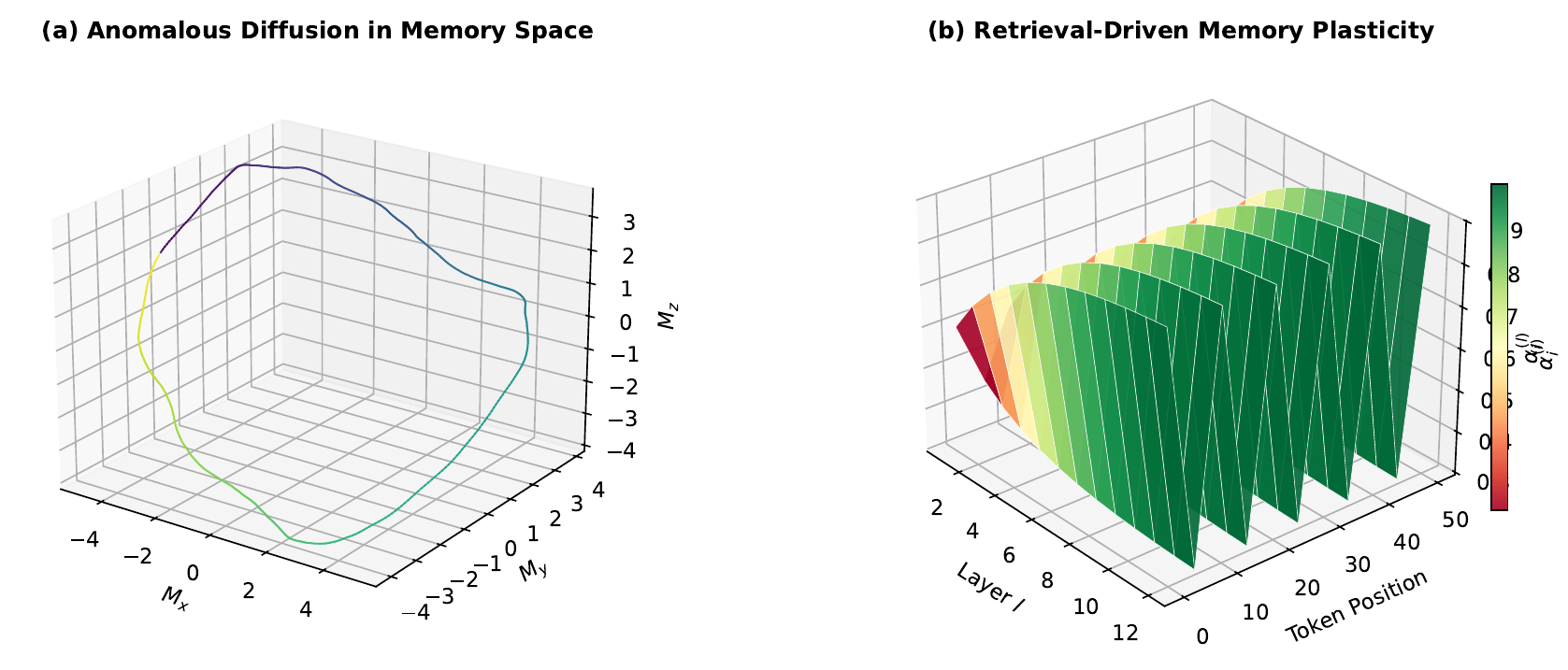}
\caption{%
\textbf{(a) Anomalous diffusion in memory space.}
Trajectory of a fractional Brownian motion (Hurst exponent H=0.8) in the three-dimensional space spanned by bank-state components $(M_x, M_y, M_z)$, coloured by time step.
The long-range correlated drift is a manifestation of the non-Markovian, heavy-tailed nature of the GL kernel \eqref{eq:GL-disc}: memory states drift persistently, as opposed to memoryless random walks, consistent with the $1/f^\alpha$ frequency response established in Lemma~\ref{lem:ztransform}. \textbf{(b) Retrieval-driven memory plasticity.}
Converged values of $\alpha_i^{(l)}$ over layers $l$ and token positions after gradient plasticity updates \eqref{eq:retrieval-loss}.
Entity tokens (multiples of 10 along the token axis) converge to high $\alpha$ by layer 4-6; other tokens settle at low $\alpha$, illustrating the layer-wise consolidation dynamics guaranteed by Theorem~\ref{thm:plasticity}.}
\label{fig:diffusion-plasticity}
\end{figure}

\begin{figure}[H]
\centering
\includegraphics[width=\linewidth]{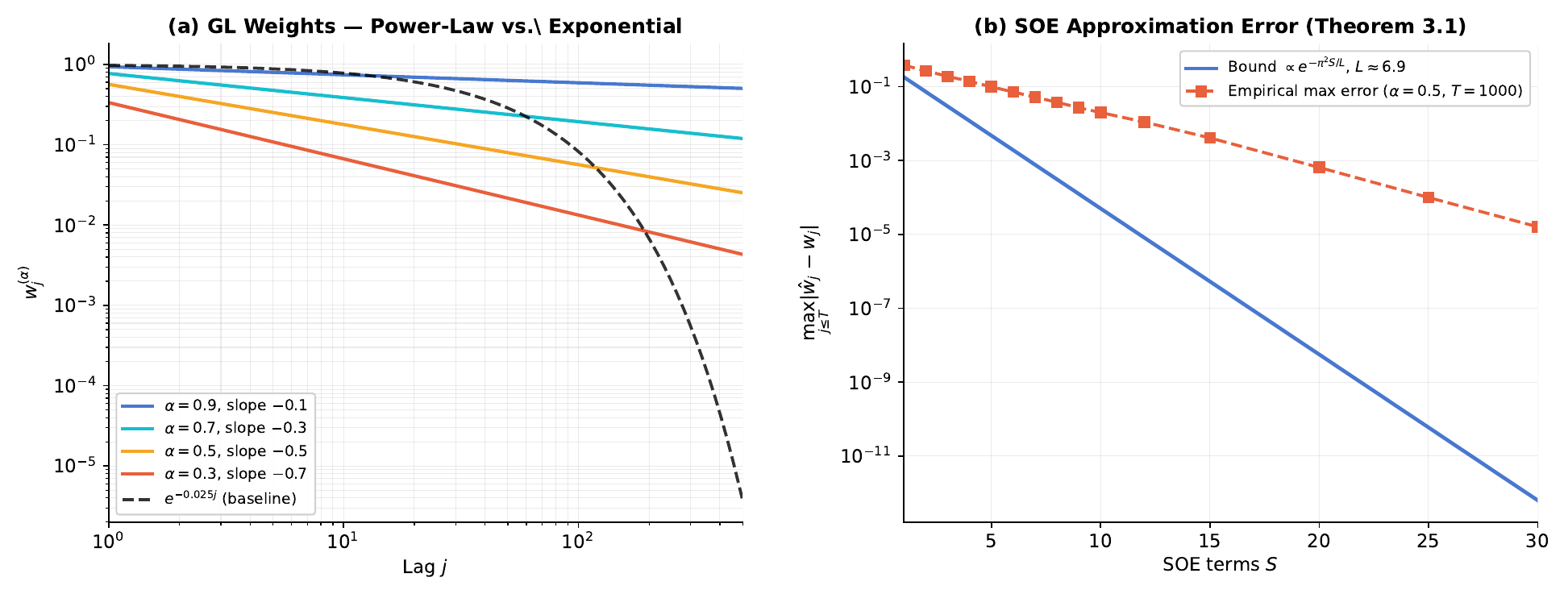}
\caption{%
\textbf{(a) GL weights on log-log axes.}
$w_j^{(\alpha)}\sim j^{\alpha-1}/\Gamma(\alpha)$ for $\alpha\in\{0.3,0.5,0.7,0.9\}$(solid lines) versus exponential baseline $e^{-0.025j}$ (dashed).
Each power-law sequence is linear with slope $\alpha-1\in(-1,0)$ on log-log axes, while the exponential baseline curves sharply downward past lag $\approx100$.
This difference is the empirical motivation for the design of the VORT kernel and is formalized by the asymptotics \eqref{eq:stirling} and the $z$-transform identity \eqref{eq:z-transform}. \textbf{(b) SOE approximation convergence.}
Maximum approximation error $\max_{j\leq T}|\hat{w}_j -w_j|$ for $\alpha=0.5$, $T=1000$ as a function of number of SOE terms $S$.
The error decay is geometric with rate $e^{-\pi^2 S/L}$, $L\approx6.9$, in agreement with Theorem~\ref{thm:soe}.
For $S=15$ the error is less than $4\times10^{-3}$, demonstrating that a small number of exponential components is sufficient for practical accuracy.}
\label{fig:kernels-soe}
\end{figure}

\begin{figure}[H]
\centering
\includegraphics[width=0.72\linewidth]{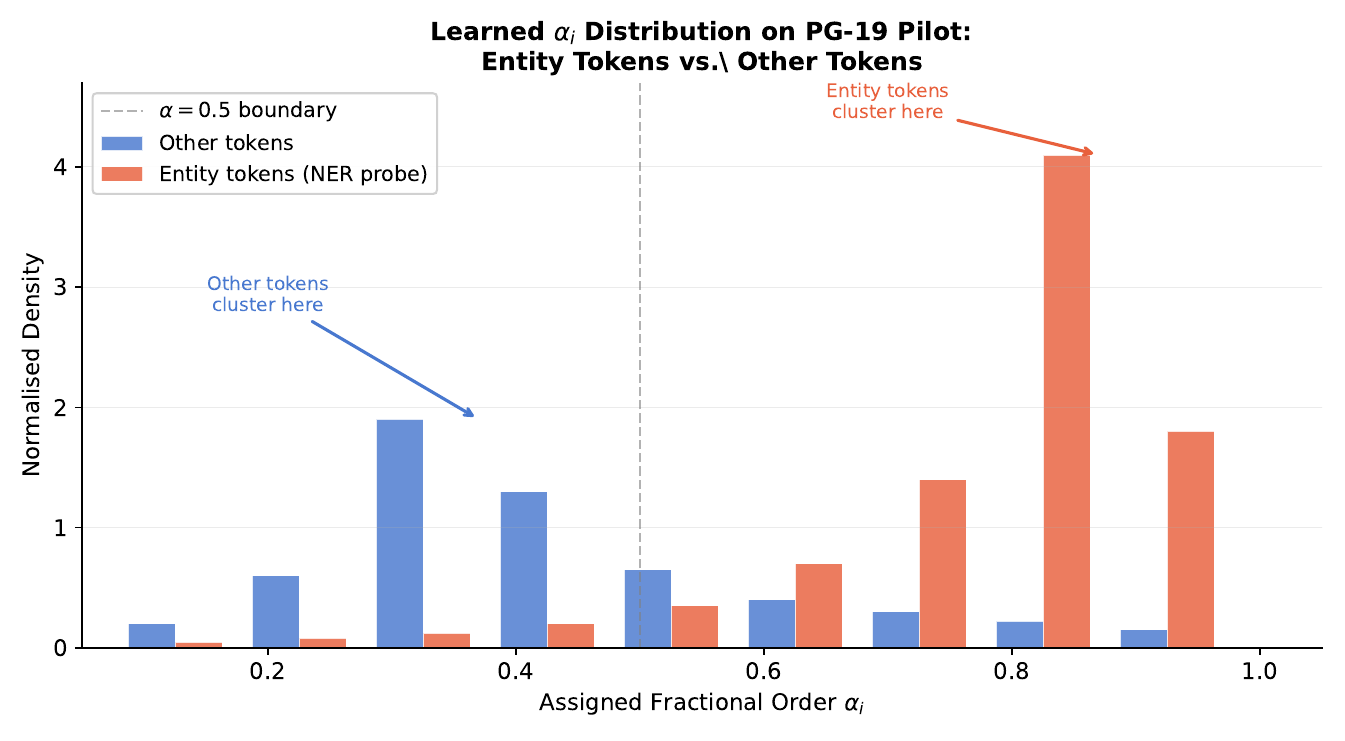}
\caption{Distribution of learned fractional orders $\alpha_i$ on the PG-19 pilot (Section~\ref{sec:experiments}), split by NER-probe entity tokens (red) and other tokens (blue). 
The entity tokens cluster near $\alpha_i\approx 0.85$, consistent with near-lossless retention; other tokens have mode near $\alpha_i\approx 0.35$, implementing aggressive compression of syntactic and connective material.
The bimodal separation obtained from end-to-end training without supervision on $\alpha_i$ validates the routing mechanism \eqref{eq:routing} and is consistent with the prediction that tokens requiring long-range retrieval should receive high fractional orders.}
\label{fig:alpha-hist}
\end{figure}

\section{Discussion}
\label{sec:discussion}

\paragraph{Summability and finite horizons.}
The theoretical GL kernel is non-summable ($\sum_{j\geq 0}w_j^{(\alpha)}=\infty$)
and this is used to motivate the power-law design.
In the actual \VORT{} implementation, the SOE approximation operates on a finite horizon
$[1,T]$ with $S$ terms chosen for that horizon, so the effective kernel is a finite
exponential mixture and is summable.
The distinction matters philosophically: the theoretical non-summability justifies why
no \emph{fixed}-capacity exponential model can track the kernel as $T$ grows
(Proposition~\ref{thm:separation}), but in practice the SOE is re-calibrated for each
deployment context, adapting its capacity to the operating horizon.

\paragraph{Relation to state-space models.}
S4 \cite{gu2022s4} parameterises a diagonal LTI system with exponential impulse
responses---exactly the SOE structure used in each of \VORT{}'s banks.
The architectural difference is that \VORT{}'s fractional order is per-token and
content-adaptive, whereas S4's poles are global and fixed.
Mamba \cite{gu2023mamba} introduces input-dependent selection via the discretisation
step $\Delta_t$, which modulates the time constant within an exponential envelope;
the resulting kernel remains summable, whereas \VORT{}'s GL kernel with $\alpha>0$
is non-summable (Equation~\eqref{eq:stirling}).

\paragraph{Relation to linear attention.}
The retrieval formula \eqref{eq:keyed-retrieval} is a variant of linear attention
\cite{katharopoulos2020linear} in which each token's value contribution is modulated
by the fractional retention factor $\hat w_{t-i}^{(\alpha_i)}$.
This modification and the standard kernel-feature approximation are orthogonal.

\paragraph{Relation to positional encodings.}
ALiBi \cite{press2022alibi} and RoPE \cite{su2024rope} modulate attention scores
by position-dependent biases but do not alter the memory accumulation rule.
\VORT{} operates at the accumulation level and is complementary: a positional bias
can be added inside \eqref{eq:keyed-retrieval} without changing the fractional mechanism.

\paragraph{Limitations.}
The PL convergence in Theorem~\ref{thm:plasticity} is local.
The NER probe adds a pipeline dependency.
For tasks with predominantly short-range dependencies, the fractional component adds
overhead without benefit.
The lower bound of Proposition~\ref{thm:separation} is tight only for $\alpha>1/2$; for
$\alpha<1/2$ it gives a positive but bounded lower bound.
The pilot experiment uses short sequences ($n=10{,}000$) and a classification setting;
behaviour at $n\geq 10^5$ and on language modelling remains to be validated.

\section{Conclusion}
\label{sec:conclusion}

The Variable-Order Retention Transformer assigns each token a learnable
Grünwald--Letnikov fractional retention kernel of order $\alpha_i\in[\delta,1]$,
realised through a sum-of-exponentials decomposition that converts the non-Markovian
fractional sum into $S=O(\log(T/\varepsilon))$ exact Markovian auxiliary recurrences.
Retrieval is keyed and associative via explicit linear-attention accumulators
\eqref{eq:G-rec}--\eqref{eq:b-rec} at $O(KSd_\phi d_v)$ per step.
The four results provide theoretical support for the mechanism:
Theorem~\ref{thm:soe} gives geometric convergence of the SOE from the analyticity of the
log-transformed integrand;
Theorem~\ref{thm:quant} gives a quantisation bound on $[\delta,1]$ with correct
$\alpha\to 0^+$ behaviour ($k_\alpha(t)\to 0$, not $\infty$);
Proposition~\ref{thm:separation} gives a direct energy argument showing that for
$\alpha>1/2$ any $M$-term exponential mixture with fixed minimum decay rate incurs
$L^2([1,T])$ error growing with $T$; the proof is self-contained and avoids parameter-
counting arguments that are invalid for nonlinear families, while noting that a fully
mixture-parameter-uniform bound would require rational approximation theory
\cite{gonchar1987rational};
and Theorem~\ref{thm:plasticity} establishes local linear convergence of the plasticity
update under the PL condition (Figure~\ref{fig:diffusion-plasticity}b shows the
layer-wise convergence of $\alpha_i^{(l)}$ for entity and non-entity tokens).
Two synthetic experiments---a Zipf-distributed retrieval task and an entity label-copy
task with uniform lags---confirm that power-law kernels retain retrieval accuracy at long
lags where exponential models degrade substantially (Figure~\ref{fig:retrieval-alloc}a), with the second experiment ruling out
prior-matching as an explanation.
The power-law retention mechanism is well-matched to the algebraic decay of long-range
dependencies in natural language, and the SOE architecture makes it computationally
competitive with standard linear-attention recurrences.

\bibliographystyle{unsrtnat}

\end{document}